\newtheorem{thm}{Theorem}
\newtheorem{ass}{Theorem}
\newtheorem{lem}{Theorem}
\newtheorem{cor}{Theorem}
\newtheorem{assumption}[ass]{Assumption}
\newtheorem{lemma}[lem]{Lemma}
\newtheorem{corollary}[cor]{Corollary}
\newtheorem{theorem1}[thm]{Theorem}
\begin{document}

\title{Layer-wise Adaptive Gradient Sparsification for Distributed Deep Learning with Convergence Guarantees}

\author{Shaohuai Shi \and Zhenheng Tang \and Qiang Wang \and Kaiyong Zhao \and Xiaowen Chu\institute{High-Performance Machine Learning Lab, Department of Computer Science, Hong Kong Baptist University,
Hong Kong S.A.R. China, email: \{csshshi,zhtang,qiangwang,kyzhao,chxw\}@comp.hkbu.edu.hk} }

\maketitle
\bibliographystyle{ecai}

\begin{abstract}
To reduce the long training time of large deep neural network (DNN) models, distributed synchronous stochastic gradient descent (S-SGD) is commonly used on a cluster of workers. However, the speedup brought by multiple workers is limited by the communication overhead. Two approaches, namely pipelining and gradient sparsification, have been separately proposed to alleviate the impact of communication overheads. Yet, the gradient sparsification methods can only initiate the communication after the backpropagation, and hence miss the pipelining opportunity. In this paper, we propose a new distributed optimization method named LAGS-SGD, which combines S-SGD with a novel layer-wise adaptive gradient sparsification (LAGS) scheme. In LAGS-SGD, every worker selects a small set of ``significant'' gradients from each layer independently whose size can be adaptive to the communication-to-computation ratio of that layer. The layer-wise nature of LAGS-SGD opens the opportunity of overlapping communications with computations, while the adaptive nature of LAGS-SGD makes it flexible to control the communication time. We prove that LAGS-SGD has convergence guarantees and it has the same order of convergence rate as vanilla S-SGD under a weak analytical assumption. Extensive experiments are conducted to verify the analytical assumption and the convergence performance of LAGS-SGD. Experimental results on a 16-GPU cluster show that LAGS-SGD outperforms the original S-SGD and existing sparsified S-SGD without losing obvious model accuracy.
\end{abstract}

\section{INTRODUCTION}
With increasing data volumes and model sizes of deep neural networks (DNNs), distributed training is commonly adopted to accelerate the training process among multiple workers. Current distributed stochastic gradient descent (SGD) approaches can be categorized into three types, synchronous \cite{dekel2012optimal,li2014efficient}, asynchronous \cite{zinkevich2010parallelized} and stall synchronous \cite{ho2013more}. Synchronous SGD (S-SGD) with data-parallelism is the most widely used one in distributed deep learning due to its good convergence properties \cite{dean2012large,goyal2017accurate}. However, S-SGD requires iterative synchronization and communication of dense gradient/parameter aggregation among all the workers. Compared to the computing speed of modern accelerators (e.g., GPUs and TPUs), the network speed is usually slow which makes communications a potential system bottleneck. Even worse, the communication time usually grows with the size of the cluster \cite{you2017scaling}. Many recent studies focus on alleviating the impact of communications in S-SGD to improve the system scalability. These studies include the system-level methods and the algorithm-level methods. 

On the system level, pipelining \cite{awan2017s,zhang2017poseidon,goyal2017accurate,shi2018performance,li2018pipe,harlap2018pipedream,jia2018highly,shi2019mgwfbp} is used to overlap the communications with the computations by exploiting the layer-wise structure of backpropagation during the training process of deep models. On the algorithmic level, researchers have proposed gradient quantization (fewer bits for a number) and sparsification (zero-out gradients that are not necessary to be communicated) techniques for S-SGD to reduce the communication traffic with negligible impact on the model convergence \cite{alistarh2017qsgd,chen2017adacomp,wen2017terngrad,lin2017deep,wu2018error,karimireddy2019error}. The gradient sparsification method is more aggressive than the gradient quantization method in reducing the communication size. For example, Top-$k$ sparsification \cite{aji2017sparse,lin2017deep} with error compensation can zero-out $99\%-99.9\%$ local gradients without loss of accuracy while quantization from $32$-bit floating points to $1$-bit has a maximum of $32\times$ size reduction. In this paper, we mainly focus on the sparsification methods, while our proposed algorithm and analysis are also applicable to the quantization methods.

A number of works have investigated the theoretical convergence properties of the gradient sparsification schemes under different analytical assumptions \cite{wangni2018gradient,stich2018sparsified,alistarh2018convergence,jiang2018linear,ivkin2019communication,karimireddy2019error}. However, these gradient sparsification methods ignore the layer-wise structure of DNNs and treat all model parameters as a single vector to derive the convergence bounds, which implicitly requires a single-layer communication \cite{you2017scaling} at the end of each SGD iteration. Therefore, the current gradient sparsification S-SGD (denoted by SLGS-SGD hereafter) cannot overlap the gradient communications with backpropagation computations, which limits the system scaling efficiency. To tackle this challenge, we propose a new distributed optimization algorithm named LAGS-SGD which exploits a layer-wise adaptive gradient sparsification (LAGS) scheme atop S-SGD to increase the system scalability. We also derive the convergence bounds for LAGS-SGD. Our theoretical convergence results on LAGS-SGD conclude that high compression ratios would slow down the model convergence rate, which indicates that one should choose the compression ratios for different layers as low as possible. The adaptive nature of LAGS-SGD provides flexible options to choose the compression ratios according to the communication-to-computation ratios. We evaluate our proposed algorithm on various DNNs to verify the soundness of the weak analytic assumption and the convergence results. Finally, we demonstrate our system implementation of LAGS-SGD to show the wall-clock training time improvement on a 16-GPU cluster with 10Gbps Ethernet interconnect. The contributions of this work are summarized as follows.

\begin{itemize}
    \item We propose a new distributed optimization algorithm named LAGS-SGD with convergence guarantees. The proposed algorithm enables us to embrace the benefits of both pipelining and gradient sparsification.
    \item We provide thorough convergence analysis for LAGS-SGD on non-convex smooth optimization problems, and the derived theoretical results indicate that LAGS-SGD has a consistent convergence guarantee with SLGS-SGD, and it has the same order of convergence rate with S-SGD under a weak analytical assumption. 
    \item We empirically verify the analytical assumption and the convergence performance of LAGS-SGD on various deep neural networks including CNNs and LSTM in a distributed setting.
    \item We implement LAGS-SGD atop PyTorch\footnote{\url{https://pytorch.org/}}, which is one of the popular deep learning frameworks, and evaluate the training efficiency of LAGS-SGD on a 16-GPU cluster connected with 10Gbps Ethernet. Experimental results show that LAGS-SGD outperforms S-SGD and SLGS-SGD on a 16-GPU cluster with little impact on the model accuracy.
\end{itemize}

The rest of the paper is organized as follows. Section \ref{sec:relatedwork} introduces some related work, and Section \ref{sec:preliminaries} presents  preliminaries for our proposed algorithm and theoretical analysis. We propose the LAGS-SGD algorithm and provide the theoretical results in Section \ref{sec:algo}. The efficient system design for LAGS-SGD is illustrated in Section \ref{sec:system}. Experimental results and discussions are presented in Section \ref{sec:experiments}. Finally, we conclude the paper in Section \ref{sec:conclude}. 

\section{RELATED WORK}\label{sec:relatedwork}
Many recent works have provided convergence analysis for distributed SGD with quantified or sparsified gradients that can be biased or unbiased.

For the unbiased quantified or sparsified gradients, researchers \cite{alistarh2017qsgd,wen2017terngrad} derived the convergence guarantees for lower-bit quantified gradients, while the quantization operator applied on gradients should be unbiased to guarantee the theoretical results. On the gradient sparsification algorithm whose sparsification method is also unbiased, Wangni et al. \cite{wangni2018gradient} derived the similar theoretical results. However, empirical gradient sparsification methods (e.g., Top-$k$ sparsification \cite{lin2017deep}) can be biased, which require some other analytical techniques to derive the bounds. In this paper, we also mainly focus on the bias sparsification operators like Top-$k$ sparsification.

For the biased quantified or sparsified gradients, Cordonnier \cite{cordonnier2018convex} and Stich et al. \cite{stich2018sparsified} provided the convergence bound for top-$k$ or random-$k$ gradient sparsification algorithms on only convex problems. Jiang et al. \cite{jiang2018linear} derived similar theoretical results, but they exploited another strong assumption that requires each worker to select the same $k$ components at each iteration so that the whole $d$ (the dimension of model/gradient) components are exchanged after a certain number of iterations. Alistarh et al. \cite{alistarh2018convergence} relaxed these strong assumptions on sparsified gradients, and further proposed an analytical assumption, in which the $\ell_2$-norm of the difference between the top-$k$ elements on fully aggregated gradients and the aggregated results on local top-$k$ gradients is bounded. Though the assumption is relaxed, it is difficult to verify in real-world applications. Our convergence analysis is relatively close to the study \cite{shi2019aconvergence} which provided convergence analysis on the biased Top-$k$ sparsification with an easy-to-verify analytical assumption. 

The above mentioned studies, however, view all the model parameters (or gradients) as a single vector to derive the convergence bounds, while we propose the layer-wise gradient sparsification algorithm which breaks down full gradients into multiple pieces (i.e., multiple layers). It is obvious that breaking a vector into pieces and selecting top-$k$ elements from each piece generates different results from the top-$k$ elements on the full vector, which makes the proofs of the bounds of LAGS-SGD non-trivial. Recently, \cite{zheng2019communication} proposed the blockwise SGD for quantified gradients, but it lacks of convergence guarantees for sparsified gradients. Simultaneous to our work, Dutta et al. \cite{aritra2020discrepancy} proposed the layer-wise compression schemes and provided a different way of proof on the theoretical analysis.

\section{PRELIMINARIES}\label{sec:preliminaries}
We consider the common settings of distributed synchronous SGD with data-parallelism on $P$ workers to minimize the non-convex objective function $f: \mathbb{R}^d \to \mathbb{R}$ by:
\begin{equation}
    \mathbf{x}_{t+1}=\mathbf{x}_{t}-\alpha_t\frac{1}{P}\sum_{p=1}^{P}G^p(\mathbf{x}_t),
\end{equation}
where $\mathbf{x}_t\in \mathbb{R}^d$ is the stacked layer-wise model parameters of the target DNN at iteration $t$, $G^p(\mathbf{x}_t)$ is the stochastic gradients of the DNN parameters at the $p^{th}$ worker with locally sampled data, and $\alpha_t \in \mathbb{R}$ is the step size (i.e., learning rate) at iteration $t$. Let $L$ denote the number of learnable layers of the DNN, and $\mathbf{x}^{(l)} \in \mathbb{R}^{d^{(l)}}$ denote the parameter vector of the $l^{th}$ learnable layer with $d^{(l)}$ elements\footnote{This generalization is also applicable to the current deep learning frameworks (e.g., PyTorch), in which the parameters of one layer may be separated into two tensors (weights and bias).}. Thus, the model parameter $\mathbf{x}$ can be represented by the concatenation of $L$ layer-wise parameters. Using $\sqcup$ as the concatenation operator, the stacked vector can be represented by
\begin{equation}
    \mathbf{x}= \sqcup_{l=1}^{L} \mathbf{x}^{(l)}= \mathbf{x}^{(1)}\sqcup \mathbf{x}^{(2)}\sqcup...\sqcup\mathbf{x}^{(L)}=[\mathbf{x}^{(1)}, \mathbf{x}^{(2)}, ..., \mathbf{x}^{(L)}]. 
\end{equation}

\textbf{Pipelining between communications and computations}. Due to the fact that the gradient computation of layer $l-1$ using the backpropagation algorithm has no dependency on the gradient aggregation of layer $l$, the layer-wise communications can then be pipelined with layer-wise computations \cite{awan2017s,zhang2017poseidon} as shown in Fig. \ref{fig:algoprocesses}(a). It can be seen that some communication time can be overlapped with the computations so that the wall-clock iteration time is reduced. Note that the pipelining technique with full gradients has no side-effect on the convergence, and it becomes very useful when the communication time is comparable to the computing time.

\textbf{Top-$k$ sparsification}. In the gradient sparsification method, the Top-$k$ sparsification with error compensation \cite{aji2017sparse,lin2017deep,shi2019distributed} is promising in reducing communication traffic for distributed training, and its convergence property has been empirically \cite{aji2017sparse,lin2017deep} verified and theoretically \cite{alistarh2018convergence,jiang2018linear,stich2018sparsified} proved under some assumptions. The model update formula of Top-$k$ S-SGD can be represented by
\begin{equation}
    \mathbf{x}_{t+1}=\mathbf{x}_{t}-\alpha_t\frac{1}{P}\sum_{p=1}^{P}\widetilde{G}^p(\mathbf{x}_t),
\end{equation}
where $\widetilde{G}^{p}(\mathbf{x}_t)=\text{TopK}({G}^{p}(\mathbf{x}_t), k)$ is the selected top-$k$ gradients at worker $p$. For any vector $\mathbf{x}\in \mathbb{R}^d$ and a given $k \leq d$, $\text{TopK}(\mathbf{x}, k)\in \mathbb{R}^d$ and its $i^{th}$ ($i=1,2,...,d$) element is:
\begin{equation}
\text{TopK}(\mathbf{x}, k)_i=
\begin{cases}
x_i,&\text{if } |x_i|>thr\\
0, &\text{otherwise}
\end{cases},
\end{equation}
where $x_i$ is the $i^{th}$ element of $\mathbf{x}$ and $thr$ is the $k^{th}$ largest value of $|\mathbf{x}|$. As shown in Fig. \ref{fig:algoprocesses}(b), in each iteration, at the end of the backpropagation pass, each worker selects top-$k$ gradients from its whole set of gradients. The selected $k$ gradients are exchanged with all other workers in the decentralized architecture or sent to the parameter server in the centralized architecture for averaging.

\begin{figure*}[!h]
	\centering
	\includegraphics[width=\linewidth]{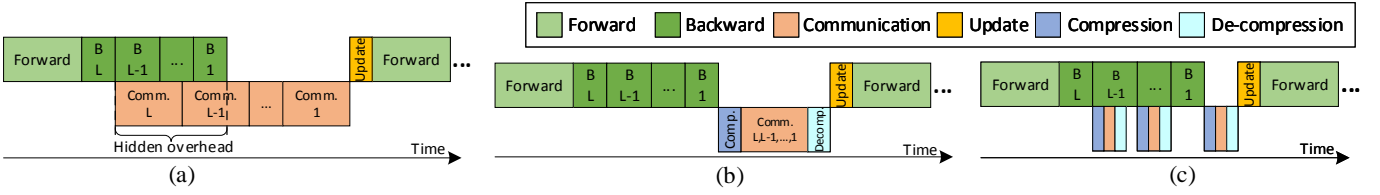}
	\vspace{-18pt}
	\caption{Comparison between three distributed training algorithms: (a) the pipeline of layer-wise gradient communications and backpropagation computations without gradient sparsification (Dense-SGD), (b) the single-layer gradient sparsification (SLGS) without pipelining, and (c) our proposed layer-wise adaptive gradient sparsification (LAGS) with pipelining.}
	\label{fig:algoprocesses}
\end{figure*}

\section{LAYER-WISE ADAPTIVE GRADIENT SPARSIFICATION}\label{sec:algo}
\subsection{Algorithm}
To enjoy the benefits of the pipelining technique and the gradient sparsification technique, we propose the LAGS-SGD algorithm, which exploits a layer-wise adaptive gradient sparsification (LAGS) scheme atop S-SGD. 

In LAGS-SGD, we apply gradient sparsification with error compensation on each layer separately. Instead of selecting the top-$k$ values from all gradients to be communicated, each worker selects top-$k^{(l)}$ gradients from layer $l$ so that it does not need to wait for the completion of backpropagation pass before communicating the sparsified gradients. LAGS-SGD not only significantly reduces the communication traffic (hence the communication time) using the gradient sparsification, but it also makes use of the layered structure of DNNs to overlap the communications with computations. As shown in Fig. \ref{fig:algoprocesses}(c), at each iteration, after the gradients $G(\mathbf{x})^{(l)}$ of layer $l$ have been calculated, $\text{TopK}(G(\mathbf{x})^{(l)}, k^{(l)})$ is selected to be exchanged among workers immediately. Formally, let $\mathbf{v}_t$ denote the model parameter and $\bm{\epsilon}^p_t$ denote the local gradient residual of worker $p$ at iteration $t$. In LAGS-SGD on distributed $P$ workers, the update formula of the $l$-layer's parameters becomes
\begin{equation}
    \mathbf{v}^{(l)}_{t+1}=\mathbf{v}^{(l)}_{t}-\frac{1}{P}\sum_{p=1}^{P}\text{TopK}\left(\alpha_t{G}^{p}(\mathbf{v}_t)^{(l)}+\bm{\epsilon}_{t}^{p,(l)}, k^{(l)}\right).
\end{equation}
The pseudo-code of LAGS-SGD is shown in Algorithm \ref{algo:lgsssgd}.
\begin{algorithm}[!h]
	\caption{LAGS-SGD at worker $p$}
	\label{algo:lgsssgd} 
	\small
	\textbf{Input: }Stochastic gradients $G^p(\cdot)$ at worker $p$\\
	\textbf{Input: }Configured $k^{(l)}$, $l=1,2,...,L$\\
	\textbf{Input: }Configured learning rates $\alpha_t$
	\begin{algorithmic}[1]
	    \For{$t=1\rightarrow L$}
	        \State Initialize $\mathbf{v}^{(l)}_0=\bm{\epsilon}^{p,(l)}_0=0$;
	    \EndFor
		\For{$t=1\rightarrow T$}
    		\State Feed-forward computation;
    		\For{$l=L\rightarrow 1$} 
    		    \State $acc_t^{p,(l)}=\bm{\epsilon}_{t-1}^{p,(l)}+\alpha_{t-1}  
    		    G^{p}(\mathbf{v}_{t-1})^{(l)}$; 
    		    \State $\bm{\epsilon}^{p,(l)}_{t}=acc_t^{p,(l)}-\text{TopK}(acc_t^{p,(l)}, k^{(l)})$; 
    		    \State $\mathbf{g}^{(l)}_t=\sum_{p=1}^{P}\text{TopK}(acc_t^{p,(l)}, k^{(l)})$; 
    	    	\State $\mathbf{v}^{(l)}_t=\mathbf{v}_{t-1}^{(l)}-\frac{1}{P}\mathbf{g}^{(l)}_t$; 
    		\EndFor
		\EndFor
	\end{algorithmic}
\end{algorithm}

\subsection{Convergence Analysis}\label{sec:convergence}
We first introduce some notations and assumptions for our convergence analysis, and then present the theoretical results of the convergence properties of LAGS-SGD.

\subsubsection{Notations and Assumptions}\label{subsec:assumptions}
Let $\|\cdot\|$ denote $\ell_2$-norm. We mainly discuss that the non-convex objective function $f:\mathbb{R}^d \to \mathbb{R}$ is $C$-smooth, i.e.,
\begin{equation}
    \|\nabla f(\mathbf{x})- \nabla f(\mathbf{y}) \| \leq C\|\mathbf{x}-\mathbf{y}\|, \forall \mathbf{x},\mathbf{y} \in \mathbb{R}^d.
\end{equation}
Let $\mathbf{x^*}$ denote the optimal solution of the objective function $f$. We assume that the sampled stochastic gradients $G(\cdot)$ are unbiased, i.e., $\mathbb{E}[G(\mathbf{v}_t)]=\nabla f(\mathbf{v}_t)$. We also assume that the second moment of the average of $P$ stochastic gradients has the following bound:
\begin{equation}\label{equ:varbound}
    \mathbb{E}[\|\frac{1}{P}\sum_{p=1}^{P}G^p(\mathbf{x})\|^2]\leq M^2, \forall \mathbf{x}\in \mathbb{R}^d.
\end{equation}
We make an analytical assumption on the aggregated results from the distributed sparsified vectors.
\begin{assumption} \label{assu:topk}
For any $P$ vectors $\mathbf{x}^p\in \mathbb{R}^d$ ($p=1,2,...P$) in $P$ workers, and each vector is sparsified as $\text{\normalfont TopK}(\mathbf{x}^p, k)$ locally. The aggregation of $\text{\normalfont TopK}(\mathbf{x}^p, k)$ selects $k$ larger values than randomly selecting $k$ values from the accumulated vectors, i.e.,
\begin{multline}
       \left\|\sum_{p=1}^{P}\mathbf{x}^p-\sum_{p=1}^{P}\text{\normalfont TopK}(\mathbf{x}^p,k)\right\|^2\leq \\
    \mathbb{E}\left[\left\|\sum_{p=1}^{P}\mathbf{x}^p-\text{\normalfont RandK}\left(\sum_{p=1}^{P}\mathbf{x}^p,k\right)\right\|^2\right],
\end{multline}
where $\text{\normalfont RandK}(\mathbf{x},k)\in \mathbb{R}^d$ is a vector whose $k$ elements are randomly selected from $\mathbf{x}$ following a uniform distribution, and the other $d-k$ elements are zeros.
\end{assumption}

Similar to \cite{alistarh2018convergence,shi2019aconvergence}, we introduce an auxiliary random variable $\mathbf{x}_t \in \mathbb{R}^d$, which is updated by the non-sparsified gradients, i.e.,
\begin{equation}\label{equ:auxiliary}
    \mathbf{x}_{t+1}=\mathbf{x}_t-\alpha_tG(\mathbf{v}_t),
\end{equation}
where $G(\mathbf{v}_t)=\frac{1}{P}\sum_{p=1}^{P}G^p(\mathbf{v}_t)$ and $\mathbf{x}_0=\mathbf{0}$. The error between $\mathbf{x}_t$ and $\mathbf{v}_t$ can be represented by
\begin{equation}
    \bm{\epsilon}_t=\mathbf{v}_t-\mathbf{x}_t=\frac{1}{P}\sum_{p=1}^{P}\bm{\epsilon}^p_t.
\end{equation}

\subsubsection{Main Results}
Here we present the major lemmas and theorems to prove the convergence of LAGS-SGD. Our results are mainly the derivation of the standard bounds in non-convex settings \cite{bottou2018optimization}, i.e.,
\begin{multline}
    \label{equ:target}
    \lim_{T\to \infty} \frac{1}{\sum_{t=1}^{T}\alpha_t}\sum_{t=1}^{T}\alpha_t\mathbb{E}[\|\nabla f(\mathbf{v}_t)\|^2] = 0, \\\text{ and } \mathbb{E}[\frac{1}{T}\sum_{t=1}^{T}\|\nabla f(\mathbf{v}_t)\|^2] \leq B,
\end{multline}
for some constants $B$ and the number of iterations $T$. 

\begin{lemma} \label{lemma:lwtopkbound}
For any $P$ vectors $\mathbf{x}^p \in \mathbb{R}^d, p=1,2,...,P$, and every vector can be broken down into $L$ pieces, that is $\mathbf{x}^p=\sqcup_{l=1}^{L} \mathbf{x}^{p,(l)}$ and $\mathbf{x}^{p,(l)}\in \mathbb{R}^{d^{(l)}}$, it holds that
\begin{align}\label{equ:lemma1}
    &\left\| \sum_{p=1}^{P} \mathbf{x}^p- \sqcup_{l=1}^{L} \left(\sum_{p=1}^{P} \text{\normalfont TopK}(\mathbf{x}^{p,(l)}, k^{(l)}) \right) \right\|^2 \notag \\ \leq &  (1-\frac{1}{c_{max}})\left\|\sum_{p=1}^{P}\mathbf{x}^p\right\|^2,
\end{align}
where $c_{max}=\max\{c^{(1)}, c^{(2)}, ..., c^{(L)}\}$, $c^{(l)}=\frac{d^{(l)}}{k^{(l)}}$ for $l=1,2,...,L$, $0< k^{(l)}\leq d^{(l)}$ and $\sum_{l=1}^{L}d^{(l)}=d$.
\end{lemma}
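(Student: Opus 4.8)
The plan is to reduce the layer-wise statement to $L$ single-layer applications of Assumption~\ref{assu:topk} by exploiting the block-orthogonality of the concatenation operator, and then to aggregate the $L$ per-layer bounds with a worst-case argument.

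First I would use the identity $\|\sqcup_{l=1}^{L}\mathbf{y}^{(l)}\|^2=\sum_{l=1}^{L}\|\mathbf{y}^{(l)}\|^2$, valid because the $L$ blocks occupy disjoint coordinate sets of $\mathbb{R}^d$. Since $\sum_{p}\mathbf{x}^p=\sqcup_{l}\big(\sum_{p}\mathbf{x}^{p,(l)}\big)$ and the right-hand concatenation in \eqref{equ:lemma1} is already in block form, subtracting block by block and applying the identity turns the left-hand side of \eqref{equ:lemma1} into $\sum_{l=1}^{L}\big\|\sum_{p=1}^{P}\mathbf{x}^{p,(l)}-\sum_{p=1}^{P}\text{TopK}(\mathbf{x}^{p,(l)},k^{(l)})\big\|^2$. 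For each fixed $l$ I would then invoke Assumption~\ref{assu:topk} with the $P$ vectors $\mathbf{x}^{1,(l)},\dots,\mathbf{x}^{P,(l)}\in\mathbb{R}^{d^{(l)}}$ and sparsity $k^{(l)}$, bounding the $l$-th term by $\mathbb{E}\big[\|\mathbf{z}^{(l)}-\text{RandK}(\mathbf{z}^{(l)},k^{(l)})\|^2\big]$ with $\mathbf{z}^{(l)}:=\sum_{p=1}^{P}\mathbf{x}^{p,(l)}$. A one-line computation finishes the per-layer bound: each coordinate of $\mathbf{z}^{(l)}$ is dropped by $\text{RandK}$ with probability $(d^{(l)}-k^{(l)})/d^{(l)}$, so by linearity of expectation the expected squared residual equals $(1-k^{(l)}/d^{(l)})\|\mathbf{z}^{(l)}\|^2=(1-1/c^{(l)})\|\mathbf{z}^{(l)}\|^2$.

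Finally I would combine the pieces: $\sum_{l=1}^{L}(1-1/c^{(l)})\|\mathbf{z}^{(l)}\|^2\le\big(\max_{l}(1-1/c^{(l)})\big)\sum_{l=1}^{L}\|\mathbf{z}^{(l)}\|^2=(1-1/c_{max})\big\|\sum_{p=1}^{P}\mathbf{x}^p\big\|^2$, using that $t\mapsto 1-1/t$ is increasing so the maximum is attained at $c_{max}$, and the block-orthogonality identity once more to re-assemble $\|\sum_p\mathbf{x}^p\|^2$. The main obstacle is conceptual rather than computational: one must be comfortable applying Assumption~\ref{assu:topk} at the reduced dimension $d^{(l)}$ instead of at the full dimension $d$ — this is precisely where the ``breaking a vector into pieces changes the Top-$k$ result'' concern from the introduction is discharged, and it is legitimate only because the assumption is stated in a dimension-independent way. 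The remaining ingredients — the disjoint-block norm identity and the elementary $\text{RandK}$ expectation — are routine.
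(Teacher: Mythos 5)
Your proof is correct and follows essentially the same route as the paper's: decompose the left-hand side block-wise via the orthogonality of the concatenation, apply Assumption~\ref{assu:topk} within each layer's subspace $\mathbb{R}^{d^{(l)}}$, evaluate the $\text{RandK}$ expectation to get the factor $(1-1/c^{(l)})$, and bound uniformly by $(1-1/c_{max})$ before re-assembling. The only cosmetic difference is that you derive the $\text{RandK}$ identity directly by linearity of expectation, whereas the paper cites it from Stich et al.
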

\begin{proof}
According to \cite{stich2018sparsified}, for any vectors $\mathbf{x} \in \mathbb{R}^d$ and $0 < k\leq d$, 
$\mathbb{E}_{\omega} [\left\| \mathbf{x} - \text{\normalfont  RandK}(\mathbf{x}, k) \right\|^2] =(1-\frac{k}{d})\| \mathbf{x} \|^2.$
Then 
\begin{align*}
    & \left\| \sum_{p=1}^{P} \mathbf{x}^p- \sqcup_{l=1}^{L} \left(\sum_{p=1}^{P} \text{\normalfont TopK}(\mathbf{x}^{p,(l)}, k^{(l)}) \right) \right\|^2  \\
    = &  \left\| \sqcup_{l=1}^{L} \left( \sum_{p=1}^{P} \mathbf{x}^{p,(l)}- \sum_{p=1}^{P} \text{\normalfont TopK}(\mathbf{x}^{p,(l)}, k^{(l)}) \right) \right\|^2  \\
    = & \sum_{l=1}^L \left\| \sum_{p=1}^{P} \mathbf{x}^{p,(l)}- \sum_{p=1}^{P} \text{\normalfont TopK}(\mathbf{x}^{p,(l)}, k^{(l)}) \right\|^2  \\
    \leq & \sum_{l=1}^L \mathbb{E}\left[ \left\| \sum_{p=1}^{P} \mathbf{x}^{p,(l)}- \text{\normalfont RandK} \left(\sum_{p=1}^{P} \mathbf{x}^{p,(l)}, k^{(l)} \right) \right\|^2 \right] \\
    = & \sum_{l=1}^L \left(1-\frac{k^{(l)}}{d^{(l)}} \right) \left\| \sum_{p=1}^P \mathbf{x}^{p,(l)} \right\|^2 \\
    \leq & \left( 1-\frac{1}{c_{max}} \right)\sum_{l=1}^L \left\| \sum_{p=1}^P \mathbf{x}^{p,(l)} \right\|^2 = \left( 1-\frac{1}{c_{max}} \right) \left\| \sum_{p=1}^P \mathbf{x}^{p} \right\|^2.
\end{align*}
\end{proof}
The inequality (\ref{equ:lemma1}) is a sufficient condition to derive the convergence properties of Algorithm \ref{algo:lgsssgd}.

\begin{corollary} \label{corollary:expbound}
For any iteration $t\geq 1$ and $\eta > 0$:
\begin{equation}
    \mathbb{E}[\| \mathbf{v}_t-\mathbf{x}_t \|^2]\leq \frac{1}{\eta}\sum_{i=1}^{t} \left( \left(1-\frac{1}{c_{max}} \right)(1+\eta) \right)^i \alpha_{t-i}^2M^2.
\end{equation}
\end{corollary}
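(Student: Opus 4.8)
The plan is to establish a one-step recursion for the mean-square residual $\mathbb{E}[\|\bm{\epsilon}_t\|^2]$, where $\bm{\epsilon}_t=\mathbf{v}_t-\mathbf{x}_t=\frac{1}{P}\sum_{p=1}^{P}\bm{\epsilon}_t^p$, and then unroll it. First I would expand the residual using the update rules. Introducing the per-worker layer-wise accumulator $\mathbf{a}_t^{p,(l)}:=\alpha_t G^p(\mathbf{v}_t)^{(l)}+\bm{\epsilon}_t^{p,(l)}$, so that $\bm{\epsilon}_{t+1}^{p,(l)}=\mathbf{a}_t^{p,(l)}-\text{TopK}(\mathbf{a}_t^{p,(l)},k^{(l)})$, subtracting the auxiliary update $\mathbf{x}_{t+1}=\mathbf{x}_t-\alpha_t G(\mathbf{v}_t)$ from $\mathbf{v}_{t+1}$ layer by layer and re-stacking gives
\[
\bm{\epsilon}_{t+1}=\frac{1}{P}\left(\sum_{p=1}^{P}\mathbf{a}_t^{p}-\sqcup_{l=1}^{L}\Big(\sum_{p=1}^{P}\text{TopK}(\mathbf{a}_t^{p,(l)},k^{(l)})\Big)\right),
\]
because the stacked sum $\sqcup_l\sum_p\mathbf{a}_t^{p,(l)}$ equals $\sum_p\mathbf{a}_t^p$ with $\mathbf{a}_t^p=\alpha_tG^p(\mathbf{v}_t)+\bm{\epsilon}_t^p$. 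Applying Lemma~\ref{lemma:lwtopkbound} with $\mathbf{x}^p=\mathbf{a}_t^p$ and noting $\frac{1}{P}\sum_p\mathbf{a}_t^p=\alpha_tG(\mathbf{v}_t)+\bm{\epsilon}_t$ then yields $\|\bm{\epsilon}_{t+1}\|^2\le\big(1-\tfrac{1}{c_{max}}\big)\,\|\bm{\epsilon}_t+\alpha_tG(\mathbf{v}_t)\|^2$.

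Next I would split the right-hand side with the elementary inequality $\|\mathbf{a}+\mathbf{b}\|^2\le(1+\eta)\|\mathbf{a}\|^2+(1+\tfrac{1}{\eta})\|\mathbf{b}\|^2$, taking $\mathbf{a}=\bm{\epsilon}_t$ and $\mathbf{b}=\alpha_tG(\mathbf{v}_t)$, and then take expectations. Conditioning on the history through iteration $t$ (so that $\bm{\epsilon}_t$ and $\mathbf{v}_t$ are fixed), using the second-moment bound~(\ref{equ:varbound}) to get $\mathbb{E}\|G(\mathbf{v}_t)\|^2\le M^2$, and applying the tower property, one obtains
\[
\mathbb{E}[\|\bm{\epsilon}_{t+1}\|^2]\le(1+\eta)\Big(1-\tfrac{1}{c_{max}}\Big)\mathbb{E}[\|\bm{\epsilon}_t\|^2]+\Big(1+\tfrac{1}{\eta}\Big)\Big(1-\tfrac{1}{c_{max}}\Big)\alpha_t^2M^2.
\]
Writing $q:=(1+\eta)\big(1-\tfrac{1}{c_{max}}\big)$ and using the identity $(1+\tfrac{1}{\eta})\big(1-\tfrac{1}{c_{max}}\big)=q/\eta$, this becomes exactly $\mathbb{E}[\|\bm{\epsilon}_{t+1}\|^2]\le q\,\mathbb{E}[\|\bm{\epsilon}_t\|^2]+\tfrac{q}{\eta}\alpha_t^2M^2$; this algebraic collapse is what produces the clean prefactor $1/\eta$ in the statement.

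Finally I would unroll this linear recursion starting from $\bm{\epsilon}_0=\mathbf{0}$ (the initialization in Algorithm~\ref{algo:lgsssgd}), obtaining $\mathbb{E}[\|\bm{\epsilon}_t\|^2]\le\tfrac{1}{\eta}\sum_{j=0}^{t-1}q^{\,t-j}\alpha_j^2M^2$, and reindex with $i=t-j$ to reach $\tfrac{1}{\eta}\sum_{i=1}^{t}q^i\alpha_{t-i}^2M^2$, which is the claimed bound once $q$ is expanded and we recall $\bm{\epsilon}_t=\mathbf{v}_t-\mathbf{x}_t$. I expect the main obstacle to be the bookkeeping in the first step: one must be careful that Lemma~\ref{lemma:lwtopkbound} is invoked on the \emph{full stacked} residual vector (with the layers playing the role of the "pieces"), not layer by layer, and that the per-worker accumulators $\mathbf{a}_t^p$ are the correct vectors to feed in; the conditional-expectation/tower-property step is routine but must be stated because $\bm{\epsilon}_t$ is itself random, and the unrolling and reindexing are standard.
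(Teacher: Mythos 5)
Your proposal is correct and follows essentially the same route as the paper: write $\mathbf{v}_{t+1}-\mathbf{x}_{t+1}$ in terms of the per-worker accumulators $\alpha_t G^p(\mathbf{v}_t)+\bm{\epsilon}_t^p$, apply Lemma~\ref{lemma:lwtopkbound} to the stacked residual, split with Young's inequality using the parameter $\eta$, unroll the recursion from $\bm{\epsilon}_0=\mathbf{0}$, and invoke the second-moment bound. The one substantive difference is that you attach $(1+\eta)$ to $\|\bm{\epsilon}_t\|^2$ and $(1+\tfrac{1}{\eta})$ to $\|\alpha_t G(\mathbf{v}_t)\|^2$, which is in fact the assignment the claimed bound requires for all $\eta>0$; the paper writes the two factors the other way around in its intermediate inequality yet its unrolled sum agrees with yours, so your version is the internally consistent one.
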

\begin{proof} Let $G(\mathbf{v}_t)=\frac{1}{P}\sum_{p=1}^PG^p(\mathbf{v}_t)$, $\mathbf{g}_t=\sum_{p=1}^{P}(\alpha_t G^p(\mathbf{v}_t)+\bm{\epsilon}_t^p)$
and $\mathbf{g}_t^{(l)}=\sum_{p=1}^{P}(\alpha_t G^p(\mathbf{v}_t)^{(l)}+\bm{\epsilon}_t^{p,(l)}), l=1,2,...,L.$
We have $\mathbf{g}_t=\sqcup_{l=1}^{L} \mathbf{g}_t^{(l)}$ and $\bm{\epsilon}_t^{(l)}=\mathbf{g}_t^{(l)}-\sum_{p=1}^P \alpha_t G^p(\mathbf{v}_t)^{(l)}$. According to the update formulas of $\mathbf{v}_{t+1}$ and $\mathbf{x}_{t+1}$, we have
\begin{align*} 
  &\|\mathbf{v}_{t+1} - \mathbf{x}_{t+1} \|^2 \\
 = & \| \sqcup_{l=1}^{L} ( \mathbf{v}_t^{(l)} - \frac{1}{P} \sum_{p=1}^{P} \text{TopK}(\mathbf{g}^{p,(l)}, k^{(l)}) \\
 &- \mathbf{x}_t^{(l)} + \frac{1}{P} \sum_{p=1}^P \alpha_t G^p(\mathbf{v}_t)^{(l)} ) \|^2 \\
  = & \sum_{l=1}^{L}\left\| \frac{1}{P}\sum_{p=1}^P\mathbf{g}_t^{p,(l)} - \frac{1}{P} \sum_{p=1}^P \text{TopK}\left(\mathbf{g}^{p,(l)}, k^{(l)} \right) \right\|^2 \\
  = & \left\| \sqcup_{l=1}^{L} \left( \frac{1}{P}\sum_{p=1}^P\mathbf{g}_t^{p,(l)} - \frac{1}{P} \sum_{p=1}^P \text{TopK}\left(\mathbf{g}^{p,(l)}, k^{(l)} \right) \right) \right\|^2 \\
  = & \left\| \frac{1}{P}\sum_{p=1}^P\mathbf{g}_t^{p} - \frac{1}{P}\sqcup_{l=1}^{L} \left(  \sum_{p=1}^P \text{TopK}\left(\mathbf{g}^{p,(l)}, k^{(l)} \right) \right) \right\|^2 \\
  \leq & (1-\frac{1}{c_{max}})\left\| \frac{1}{P} \sum_{p=1}^{P}\mathbf{g}_t^p \right\|^2 = (1-\frac{1}{c_{max}})\left\| \alpha_tG(\mathbf{v}_t)+\mathbf{v}_t-\mathbf{x}_t \right\|^2 \\
  \leq & (1-\frac{1}{c_{max}})\left( (1+\eta)\|\alpha_tG(\mathbf{v}_t)\|^2 + (1+\frac{1}{\eta})\| \mathbf{v}_t-\mathbf{x}_t \|^2 \right),
\end{align*} 
where $\eta > 0$. Iterating the above inequality from $i=0\to t$ yields:
\begin{align*}
&\| \mathbf{v}_t-\mathbf{x}_t \|^2 \\
\leq& (1-\frac{1}{c_{max}})(1+\frac{1}{\eta})\sum_{i=1}^{t}((1-\frac{1}{c_{max}}) (1+\eta))^{i-1}\|\alpha_{t-i}G(\mathbf{v}_{t-i})\|^2 \\
= & \frac{1}{\eta}\sum_{i=1}^{t}((1-\frac{1}{c_{max}}) (1+\eta))^{i}\|\alpha_{t-i}G(\mathbf{v}_{t-i})\|^2.
\end{align*}
Taking the expectation and using the bound of the second moment on stochastic gradients: $\mathbb{E}[\|G(\mathbf{v}_t)\|^2]\leq M^2$, we obtain
\begin{align*}
\mathbb{E}\left[\| \mathbf{v}_t-\mathbf{x}_t \|^2\right] 
\leq & \frac{1}{\eta}\sum_{i=1}^{t}\left((1-\frac{1}{c_{max}}) (1+\eta)\right)^{i}\mathbb{E}\left[\|\alpha_{t-i}G(\mathbf{v}_{t-i})\|^2\right]\\
\leq & \frac{1}{\eta}\sum_{i=1}^{t}\left((1-\frac{1}{c_{max}}) (1+\eta)\right)^{i}\alpha_{t-i}^2M^2,
\end{align*}
which concludes the proof.
\end{proof}
Corollary \ref{corollary:expbound} implies that the parameters with sparsified layer-wise gradients have bounds compared to that with dense gradients. 

\begin{theorem1}\label{theo:main}
Under the assumptions defined in the objective function $f$, after running $T$ iterations with Algorithm \ref{algo:lgsssgd}, we have
\begin{align}\label{equ:main}
    &\frac{1}{\sum_{t=1}^T\alpha_t}\sum_{t=1}^T\alpha_t\mathbb{E}[\|\nabla f(\mathbf{v}_t)\|^2] \notag \\\leq &  \frac{4(f(\mathbf{x}_0)-f(\mathbf{x}^*))}{\sum_{t=1}^T\alpha_t}+ 
    \frac{2(C+\frac{2C^2D}{\eta})M^2\sum_{t=1}^T\alpha_t^2}{\sum_{t=1}^T\alpha_t},
\end{align}
if one chooses a step size schedule such that $\exists D>0$ and $\exists \eta >0$,
\begin{equation}\label{equ:lrbound}
    \sum_{i=1}^{t}\left(\left(1-\frac{1}{c_{max}}\right)(1+\eta)\right)^{i}\frac{\alpha_{t-i}^2}{\alpha_t}\leq D
\end{equation}
holds at any iteration $t> 0$. 
\end{theorem1}

\begin{proof}
We use the smooth property of $f$ and Corollary 1 to derive (\ref{equ:main}). First, with the smoothness $C$ of $f$, we have
\begin{align*}
    f(\mathbf{x}_{t+1})-f(\mathbf{x}_{t}) \leq & \nabla f(\mathbf{x}_t)^T(\mathbf{x}_{t+1}-\mathbf{x}_{t}) + \frac{C}{2}\| \mathbf{x}_{t+1}-\mathbf{x}_{t} \|^2 \\
    = & \alpha_t \nabla f(\mathbf{x}_t)^TG(\mathbf{v}_t) + \frac{\alpha_t^2 C}{2}\| G(\mathbf{v}_t)\|^2.
\end{align*}
Taking expectation with respective to sampling at $t$, it yields
\begin{align*}
     & \mathbb{E}_t[f(\mathbf{x}_{t+1})]-f(\mathbf{x}_{t}) \\
    \leq & \alpha_t \nabla f(\mathbf{x}_t)^T \mathbb{E}_t[G(\mathbf{v}_t)] + \frac{\alpha_t^2 C}{2} \mathbb{E}_t[\| G(\mathbf{v}_t)\|^2] \\
    = & \alpha_t \nabla f(\mathbf{x}_t)^T \nabla f(\mathbf{v}_t) + \frac{\alpha_t^2 C}{2} \mathbb{E}_t[\| G(\mathbf{v}_t)\|^2] \\
    =&-\frac{\alpha_t}{2}\|\nabla f(\mathbf{x}_t)\|^2-\frac{\alpha_t}{2}\|\nabla f(\mathbf{v}_t)\|^2 \\
    &+\frac{\alpha_t}{2}\|\nabla f(\mathbf{x}_t) -\nabla f(\mathbf{v}_t)\|^2+\frac{\alpha_t^2C}{2}\mathbb{E}_t[\|G(\mathbf{v}_t)\|^2] \\
    \leq & -\frac{\alpha_t}{2}\|\nabla f(\mathbf{x}_t)\|^2 + \frac{\alpha_t C^2}{2}\|\mathbf{v}_t-\mathbf{x}_t\|^2 +\frac{\alpha_t^2CM^2}{2} \\
    = & -\frac{\alpha_t}{2}(\|\nabla f(\mathbf{x}_t)\|^2 + C^2\|\mathbf{v}_t-\mathbf{x}_t\|^2) \\
    &+\alpha_tC^2\|\mathbf{v}_t-\mathbf{x}_t\|^2 +\frac{\alpha_t^2CM^2}{2}. 
\end{align*}
Taking expectation with respective to the gradients before $t$, it yields
\begin{align*}
 \mathbb{E}[f(\mathbf{x}_{t+1})]-\mathbb{E}[f(\mathbf{x}_t)] 
 \leq & -\frac{\alpha_t}{2}\mathbb{E}[\|\nabla f(\mathbf{x}_t)\|^2+ C^2\|\mathbf{v}_t-\mathbf{x}_t\|^2] \\ &+\alpha_tC^2\mathbb{E}[\|\mathbf{v}_t-\mathbf{x}_t\|^2] +\frac{\alpha_t^2CM^2}{2}.
\end{align*}
Using Corollary 1, we obtain
\begin{align*}
      & \mathbb{E}[f(\mathbf{x}_{t+1})]-\mathbb{E}[f(\mathbf{x}_t)] \\
    \leq & \frac{\alpha_tC^2}{\eta}\sum_{i=1}^{t}((1-\frac{1}{c_{max}})(1+\eta))^{i}\alpha_{t-i}^2 M^2+\frac{\alpha_t^2CM^2}{2} \\ 
    & -\frac{\alpha_t}{2}\mathbb{E}[\|\nabla f(\mathbf{x}_t)\|^2+ C^2\|\mathbf{v}_t-\mathbf{x}_t\|^2] \\
    =&\frac{\alpha_t^2 C^2}{\eta}\sum_{i=1}^{t}((1-\frac{1}{c_{max}})(1+\eta))^{i}\frac{\alpha_{t-i}^2}{\alpha_{t}}M^2+\frac{\alpha_t^2CM^2}{2} \\
    &-\frac{\alpha_t}{2}\mathbb{E}[\|\nabla f(\mathbf{x}_t)\|^2+ C^2\|\mathbf{v}_t-\mathbf{x}_t\|^2].
\end{align*}
If (\ref{equ:lrbound}) holds, then we have
\begin{align*}
&\mathbb{E}[f(\mathbf{x}_{t+1})]-\mathbb{E}[f(\mathbf{x}_t)] \\
\leq & (C+\frac{2C^2D}{\eta})\frac{M^2 \alpha_t^2}{2}-\frac{\alpha_t}{2}\mathbb{E}[\|\nabla f(\mathbf{x}_t)\|^2+ C^2\|\mathbf{v}_t-\mathbf{x}_t\|^2].
\end{align*}
Adjusting the order, we obtain
\begin{align}\label{in:1}
&\alpha_t\mathbb{E}[\|\nabla f(\mathbf{x}_t)\|^2+ C^2\|\mathbf{v}_t-\mathbf{x}_t\|^2] \notag\\
\leq & 2(\mathbb{E}[f(\mathbf{x}_t)]  - \mathbb{E}[f(\mathbf{x}_{t+1})]) +(C+\frac{2C^2D}{\eta})M^2 \alpha_t^2.
\end{align}
We further apply the property of $f$, that is
\begin{align*}
    \|\nabla f(\mathbf{v}_t)\|^2 &=\|\nabla f(\mathbf{v}_t)-\nabla f(\mathbf{x}_t)+\nabla f(\mathbf{x}_t)\|^2\\
    &\leq 2\|\nabla f(\mathbf{v}_t)-\nabla f(\mathbf{x}_t)\|^2+2\|\nabla f(\mathbf{x}_t)\|^2\\
    &\leq 2C^2\|\mathbf{v}_t-\mathbf{x}_t\|^2+2\|\nabla f(\mathbf{x}_t)\|^2.
\end{align*}
Together with (\ref{in:1}), it yields
\begin{align*}
    &\alpha_t\mathbb{E}[\|\nabla f(\mathbf{v}_t)\|^2] 
    \leq  2\alpha_t \mathbb{E}[C^2\|\mathbf{v}_t-\mathbf{x}_t\|^2+\|\nabla f(\mathbf{x}_t)\|^2]\\
    \leq& 4(\mathbb{E}[f(\mathbf{x}_{t})]-\mathbb{E}[f(\mathbf{x}_{t+1})])+ 2(C+\frac{2C^2D}{\eta})M^2 \alpha_t^2.
\end{align*}
Summing up the inequality for $t=1,2,...,T$, it yields
\begin{align*}
    \sum_{t=1}^{T}\alpha_t\mathbb{E}[\|\nabla f(\mathbf{v}_t)\|^2] 
    \leq&  4(f(\mathbf{x}_0)-f(\mathbf{x}^*)) \\
    &+2(C+\frac{2C^2D}{\eta})M^2\sum_{t=1}^{T}\alpha_t^2.
\end{align*}
Multiplying $\frac{1}{\sum_{t=1}^{T}\alpha_t}$ in both sides concludes the proof.
\end{proof}

Theorem \ref{theo:main} indicates that if one chooses the step sizes to satisfy (\ref{equ:lrbound}), then the right hand side of (\ref{equ:main}) converges as $T\to \infty$, so that Algorithm \ref{algo:lgsssgd} is guaranteed to converge. If we let $(1-\frac{1}{c_{max}})(1+\eta) < 1$ which is easily satisfied then (\ref{equ:lrbound}) holds for both constant and diminishing step sizes. Therefore, if the step sizes are further configured as 
\begin{equation}
    \lim_{T\to \infty}\sum_{t=1}^T\alpha_t=\infty \text{ and } \lim_{T\to \infty}\sum_{t=1}^T\alpha^2_t<\infty,
\end{equation}
then the right hand side of inequality (\ref{equ:main}) converges to zero, which ensures the convergence of Algorithm \ref{algo:lgsssgd}. 

\begin{corollary}\label{corollary:convergencerate} 
Under the same assumptions in Theorem \ref{theo:main}, if $\alpha_t=\theta/ \sqrt{T}, \forall t>0$, where $\theta > 0$ is a constant, then we have the convergence rate bound for Algorithm \ref{algo:lgsssgd} as:
\begin{multline}\label{equ:final}
    \mathbb{E}[\frac{1}{T}\sum_{t=1}^{T}\|\nabla f(\mathbf{v}_t)\|^2]
        \leq \frac{4\theta^{-1}(f(\mathbf{x}_0)-f(\mathbf{x}^*))+ 2\theta CM^2}{\sqrt{T}}\\ +\frac{4C^2M^2(c_{max}^3-c_{max})\theta^2}{T},
\end{multline}
if the total number of iterations $T$ is large enough.
\end{corollary}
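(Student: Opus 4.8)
The plan is to specialize Theorem~\ref{theo:main} to the constant step size $\alpha_t\equiv\theta/\sqrt{T}$ and to pick the free parameter $\eta$ so that the constants in~(\ref{equ:main}) collapse to the form claimed in~(\ref{equ:final}). The first thing to verify is that the step-size condition~(\ref{equ:lrbound}) is satisfiable: with a constant step size one has $\alpha_{t-i}^2/\alpha_t=\alpha_t=\theta/\sqrt{T}$ for every $i$, so the left-hand side of~(\ref{equ:lrbound}) equals $\alpha_t\sum_{i=1}^{t}r^i$ with $r:=(1-\tfrac{1}{c_{max}})(1+\eta)$; as long as $r<1$ this is at most $\alpha_t\tfrac{r}{1-r}$, so one may take $D:=\tfrac{\theta}{\sqrt{T}}\cdot\tfrac{r}{1-r}$. (If $c_{max}=1$ there is no sparsification, $r=0$, and the last term of~(\ref{equ:final}) vanishes, so assume $c_{max}>1$ from now on.)

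The key choice is $\eta:=\tfrac{1}{c_{max}^2-1}$. Then $1+\eta=\tfrac{c_{max}^2}{c_{max}^2-1}$, and a short computation gives $r=\tfrac{c_{max}-1}{c_{max}}\cdot\tfrac{c_{max}^2}{c_{max}^2-1}=\tfrac{c_{max}}{c_{max}+1}\in(0,1)$, hence $\tfrac{r}{1-r}=c_{max}$ and $D=\tfrac{c_{max}\theta}{\sqrt{T}}$. Consequently $\tfrac{D}{\eta}=c_{max}(c_{max}^2-1)\tfrac{\theta}{\sqrt{T}}=(c_{max}^3-c_{max})\tfrac{\theta}{\sqrt{T}}$, which is exactly the factor that will generate the $O(1/T)$ term of~(\ref{equ:final}).

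With these choices I would then substitute into~(\ref{equ:main}). Because $\alpha_t$ is constant, the weighted average on the left-hand side is the plain average $\mathbb{E}[\tfrac{1}{T}\sum_{t=1}^{T}\|\nabla f(\mathbf{v}_t)\|^2]$; moreover $\sum_{t=1}^{T}\alpha_t=\theta\sqrt{T}$ and $\sum_{t=1}^{T}\alpha_t^2=\theta^2$, so $\tfrac{\sum_t\alpha_t^2}{\sum_t\alpha_t}=\tfrac{\theta}{\sqrt{T}}$. Then~(\ref{equ:main}) reads $\mathbb{E}[\tfrac{1}{T}\sum_t\|\nabla f(\mathbf{v}_t)\|^2]\le \tfrac{4(f(\mathbf{x}_0)-f(\mathbf{x}^*))}{\theta\sqrt{T}}+2\big(C+\tfrac{2C^2D}{\eta}\big)M^2\tfrac{\theta}{\sqrt{T}}$, and expanding $2\big(C+\tfrac{2C^2D}{\eta}\big)M^2\tfrac{\theta}{\sqrt{T}}=\tfrac{2CM^2\theta}{\sqrt{T}}+\tfrac{4C^2M^2\theta}{\sqrt{T}}\cdot\tfrac{D}{\eta}$, the first piece joins the $f(\mathbf{x}_0)-f(\mathbf{x}^*)$ term over $\sqrt{T}$ and the second piece becomes $\tfrac{4C^2M^2(c_{max}^3-c_{max})\theta^2}{T}$, which is precisely~(\ref{equ:final}).

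I do not expect a genuine obstacle here once Theorem~\ref{theo:main} is available: the entire content is the algebraic choice $\eta=1/(c_{max}^2-1)$, which simultaneously keeps $r<1$ (so the geometric series converges and $D$ is finite) and makes $D/\eta$ collapse to the clean value $c_{max}^3-c_{max}$; everything else is bookkeeping of the constant-step-size sums. The one point worth a sentence of care is that the $D$ produced above depends on $T$, which is legitimate because $T$ is fixed throughout Theorem~\ref{theo:main} and~(\ref{equ:lrbound}) only requires the \emph{existence} of such a $D$ for the chosen schedule; the ``$T$ large enough'' proviso simply makes the $\Theta(1/\sqrt{T})$ terms dominate the $\Theta(1/T)$ sparsification penalty.
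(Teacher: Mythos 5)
Your proposal is correct and follows essentially the same route as the paper: specialize Theorem~\ref{theo:main} to the constant step size, bound the geometric series in~(\ref{equ:lrbound}) to exhibit a valid $D$, and choose $\eta$ so that $D/\eta$ collapses to $\alpha(c_{max}^3-c_{max})$. The only difference is cosmetic --- the paper takes $\eta=1/c_{max}$ (giving $\tau=1-1/c_{max}^2$ and $D=\alpha(c_{max}^2-1)$) while you take $\eta=1/(c_{max}^2-1)$ (giving $r=c_{max}/(c_{max}+1)$ and $D=\alpha c_{max}$), but both choices yield the identical value $D/\eta=\alpha(c_{max}^3-c_{max})$ and hence the same final bound.
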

\begin{proof}
As $\alpha_t=\theta/ \sqrt{T}$, we simplify the notations by: $\alpha=\alpha_t=\theta/ \sqrt{T}$ and $\tau=(1-\frac{1}{c_{max}})(1+\eta)$. The left hand side of (\ref{equ:lrbound}) becomes
\begin{align*}
    \sum_{i=1}^{t}((1-\frac{1}{c_{max}})(1+\eta))^{i}\frac{\alpha_{t-i}^2}{\alpha_t}
    =\sum_{i=1}^{t}\tau^{i}\frac{\alpha_{t-i}^2}{\alpha_t}
    =\alpha \frac{\tau(1-\tau^t)}{1-\tau}.
\end{align*}
Let $\eta=\frac{1}{c_{max}}$, then $0\leq \tau=(1-\frac{1}{c_{max}})(1+\eta)<1$, and
\begin{align*}
    \lim_{t\to \infty}\alpha \frac{ \tau(1-\tau^t)}{1-\tau}=\frac{\alpha \tau}{1-\tau}.
\end{align*}
So (\ref{equ:lrbound}) holds when $D=\frac{\alpha \tau}{1-\tau}$. Applying Theorem \ref{theo:main}, we obtain
\begin{align*}
    &\mathbb{E}\left[\frac{1}{T}\sum_{t=1}^{T}\|\nabla f(\mathbf{v}_t)\|^2\right] \\ \leq&  \frac{4(f(\mathbf{x}_0)-f(\mathbf{x}^*))}{\alpha T}+2(C+\frac{2C^2D}{\eta})M^2\alpha \\
    = & \frac{4\theta^{-1}(f(\mathbf{x}_0)-f(\mathbf{x}^*))+2\theta CM^2}{\sqrt{T}}+\frac{4C^2M^2(c_{max}^3-c_{max})\theta^2}{T},
\end{align*}
which concludes the proof.
\end{proof}

In Corollary \ref{corollary:convergencerate}, if $T$ is large enough, then the right hand side of inequality (\ref{equ:final}) is dominated by the first term. It implies that Algorithm \ref{algo:lgsssgd} has a convergence rate of $O(1/\sqrt{T})$, which is the same as the vanilla SGD \cite{dekel2012optimal}. However, the second term of inequality (\ref{equ:final}) also indicates that higher compress ratios (i.e., $c_{max}$) lead to a larger bound of the convergence rate. In real-world settings, one may have a fixed number of iteration budget $T$ to train the model, so high compression ratios could slowdown the convergence speed. On the one hand, if we choose lower compression ratios, then the algorithm has a faster convergence rate (less number of iterations). On the other hand, lower compression ratios have a larger communication size and thus may result in longer wall-clock time per iteration. Therefore, the adaptive selection of the compression ratios tackles the problem properly.

\section{SYSTEM IMPLEMENTATION AND OPTIMIZATION}\label{sec:system}
The layer-wise sparsification nature of LAGS-SGD enables the pipelining technique to hide the communication overheads, while the efficient system implementation of communication and computation parallelism with gradient sparsification is non-trivial due to three reasons: 1) Layer-wise communications with sparsified gradients indicate that there exist many small size messages to be communicated across the network, while collectives (e.g., AllReduce) with small messages are latency-sensitive. 2) Gradient sparsification (especially top-$k$ selection on GPUs) would introduce extra computation time. 3) The convergence rate of LAGS-SGD is negatively effected by the compression ratio, and one should decide proper compression ratios to trade-off the number of iteration to converge and the iteration wall-clock time. 

First, we exploit a heuristic method to merge extremely small sparsified tensors to a single one for efficient communication to address the first problem. Specifically, we use a memory buffer to temporarily store the sparsified gradients, and aggregate the buffered gradients once the buffer becomes full or the gradients of the first layer have been calculated. Second, we implement the double sampling method \cite{lin2017deep} to approximately select the top-$k$ gradients, which can significantly reduce the top-$k$ selection time on GPUs. Finally, to achieve a balance between the convergence rate and the training wall-clock time, we propose to select the layer-wise compression ratio according to the communication-to-computation ratio. To be specific, we select a compression ratio $c^{(l)}=d^{(l)}/k^{(l)}$ for layer $l$ such that its communication overhead is appropriately hidden by the computation. Given an upper bound of the compression ratio (e.g., $c_u=1000$), the algorithm determines $c^{(l)}$ according to the following three metrics: 1) Backpropagation computation time of the pipelined layers (i.e., $t_{comp}^{(l-1)}$); 2) Communication time of the current layer $t_{comm}^{(l)}$ under a specific compression ratio $c^{(l)}$, which can be predicted using the communication model of the AllGather or AllReduce collectives (e.g., \cite{li2018pipe,renggli2018sparcml}) according to the size of gradients and the inter-connection (e.g., latency and bandwidth) between workers; 3) An extra overhead involved by the sparsification operator ($t_{spar}^{(l)}$), which generally includes a pair of operations (compression and de-compression). Therefore, the selected value of $c^{(l)}$ can be generalized as
\begin{equation}\label{equ:adapratio}
    c^{(l)}= \max \{c_u, \min { \{ c | t_{comm}^{(l)}(c)+t_{spar}^{(l)} \leq t_{comp}^{(l-1)}\} }\}.
\end{equation}

\subsection{Bound of Pipelining Speedup}
In LAGS-SGD, the sparsification technique is used to reduce the overall communication time, and the pipelining technique is used to further overlap the already reduced communication time with computation time. We can analyze the optimal speedup of LAGS-SGD over SLGS-SGD in terms of wall-clock time under the same compression ratios. Let $t_{f}$, $t_{b}$ and $t_{c}$ denote the forward computation, backward computation and gradient communication time at each iteration respectively. We assume that the sparsification overhead can be ignored as we can use the efficient sampling method. Compared to SLGS-SGD, LAGS-SGD reduces the wall-clock time by pipelining the communications with computations, and the maximum overlapped time is $t_{hidden}=\min\{t_{b}, t_{c}\}$ (i.e., either backpropagation computations or communications are completely overlapped). So the maximum speedup of LAGS-SGD over SLGS-SGD can be calculated as $S=(t_{f}+t_{b}+t_{c})/(t_{f}+t_{b}+t_{c}-t_{hidden})$. Let $r=t_{c}/t_{b}$ denote the communication-to-computation ratio. The ideal speedup can be represented by
\begin{equation}\label{equ:smax}
    S_{max}= 1 + \frac{1}{\frac{t_{f}}{\min(t_c,t_b)}+\max(r,1/r)}.
\end{equation}
The equation shows that the maximum speedup of LAGS-SGD over SLGS-SGD mainly depends on the communication-to-computation ratio. If $r$ is close to $1$, then LAGS-SGD has the potential to achieve the highest speedup by completely hiding either the backpropagation computation or the communication time.

\section{EXPERIMENTS}\label{sec:experiments}

\subsection{Experimental Settings}
We conduct the similar experiments as the work \cite{lin2017deep}, which cover two types of applications with three data sets: 1) image classification by convolutional neural networks (CNNs) such as ResNet-20 \cite{he2016deep} and VGG-16 \cite{simonyan2014very} on the Cifar-10 \cite{krizhevsky2010cifar} data set and Inception-v4 \cite{szegedy2017inception} and ResNet-50 \cite{he2016deep} on the ImageNet \cite{deng2009imagenet} data set; 2) language model by a 2-layer LSTM model (LSTM-PTB) with 1500 hidden units per layer on the PTB \cite{marcus1993building} data set. On Cifar-10, the batch size for each worker is 32, and the base learning rate is 0.1; On ImageNet, the batch size for each worker is also 32, and the learning rate is 0.01; On PTB, the batch size and learning rate is 20 and 22 respectively. We set the compression ratios as $1000$ and $250$ for CNNs and LSTM respectively. In all compared algorithms, the hyper-parameters are kept the same and experiments are conducted on a 16-GPU cluster.

\begin{figure*}[!ht]
	\centering
	\includegraphics[width=0.95\linewidth]{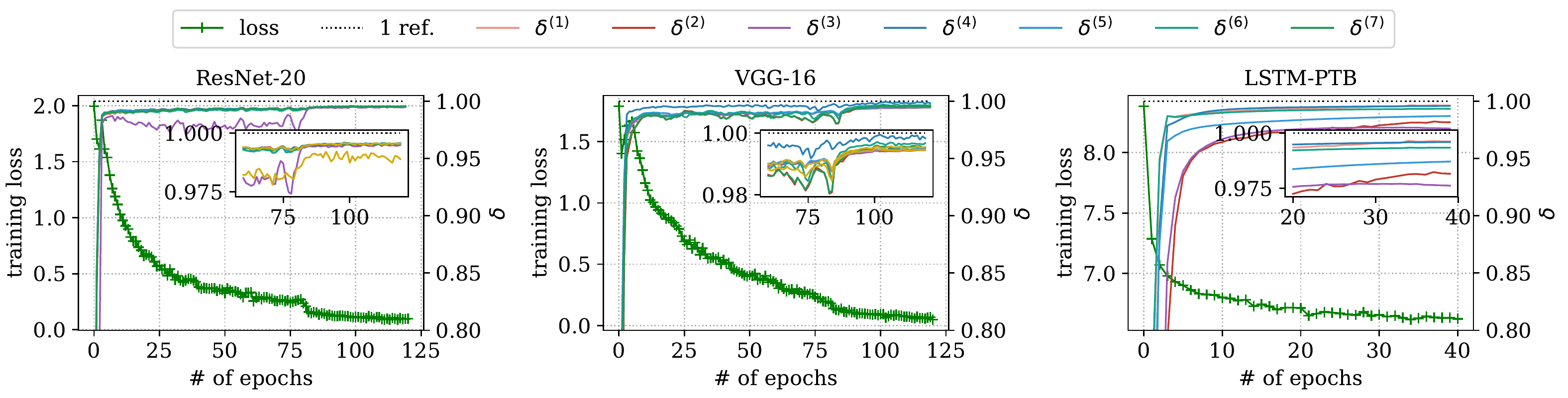}
	\vspace{-10pt}
	\caption{The values of $\delta^{(l)}$ ($7$ layers are displayed for better visualization), and the training loss of LAGS-SGD on 16 workers.}
	\label{fig:verifyass}
\end{figure*}
\subsection{Verification of Assumption \ref{assu:topk} and Convergences}
To show the soundness of Assumption \ref{assu:topk} and the convergence results, we conduct the experiments with 16 workers to train the models. We define metrics $\delta^{(l)}$ ($l=1,2,...,L$) for each learnable layer during the training process at each iteration with Algorithm \ref{algo:lgsssgd}, and
\begin{equation}
\delta^{(l)}= \frac{\left\|\sum_{p=1}^{P}\mathbf{x}^{p,(l)}-\sum_{p=1}^{P}\text{\normalfont TopK}(\mathbf{x}^{p,(l)},k^{(l)})\right\|^2}{
    \left\|\sum_{p=1}^{P}\mathbf{x}^{p,(l)}-\text{\normalfont RandK}\left(\sum_{p=1}^{P}\mathbf{x}^{p,(l)},k^{(l)}\right)\right\|^2},
\end{equation}
where $\mathbf{x}^{p,(l)}=G^p(\mathbf{v}_t)^{(l)}+\bm{\epsilon}_t^{p,(l)}$. Assumption \ref{assu:topk} holds if $\delta^{(l)} \leq 1$ ($l=1,2,...,L$). We measure $\delta^{(l)}$ on ResNet-20, VGG-16 and LSTM-PTB during training, and the results are shown in Fig. \ref{fig:verifyass}. It is seen that $\delta^{(l)}<1$ throughout the training process, which implies that Assumption \ref{assu:topk} holds. The evaluated models all converge in a certain number of epochs, which verifies the convergence of LAGS-SGD.

\subsection{Comparison of Convergence Rates}
\begin{figure}[!ht]
	\centering
	\includegraphics[width=0.98\linewidth]{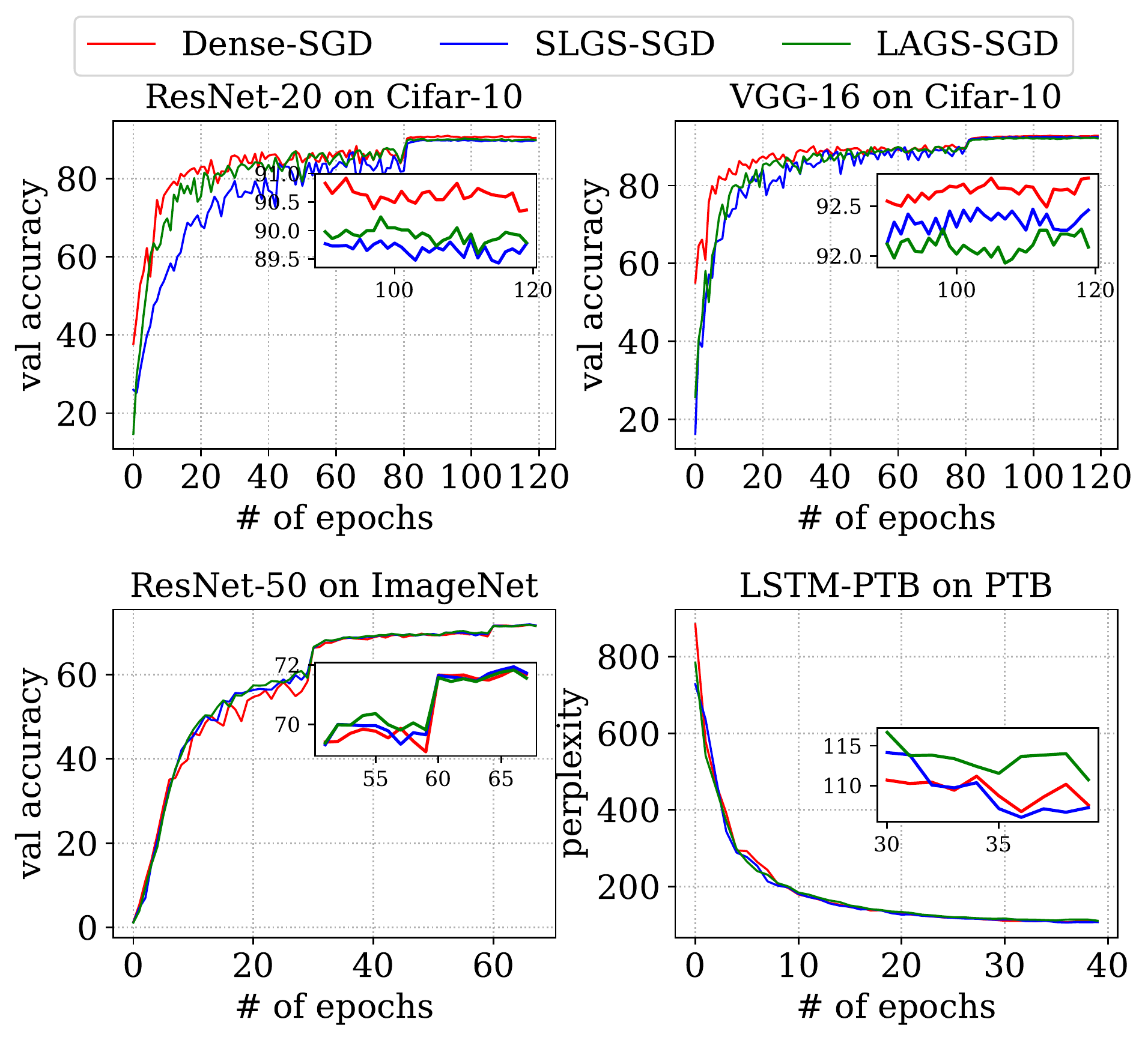}
	\vspace{-8pt}
	\caption{The comparison of convergence performance.}
	\label{fig:convergencerates}
\end{figure}
The convergence comparison under the same number of training epochs is shown in Fig. \ref{fig:convergencerates}. The top-1 validation accuracy (the higher the better) on CNNs and the validation perplexity (the lower the better) on LSTM show that LAGS-SGD has very close convergence performance to SLGS-SGD. Compared to Dense-SGD, SLGS-SGD and LAGS-SGD both have slight accuracy losses. The problem could be resolved by some training tricks like momentum correction \cite{lin2017deep}. The final evaluation results are shown in Table \ref{table:acc}. The nearly consistent convergence performance between LAGS-SGD and Dense-SGD verifies our theoretical results on the convergence rate.
\begin{table}
    \centering
    \caption{Comparison of evaluation performance. Top-1 validation accuracy for CNNs and perplexity for LSTM-PTB.}\label{table:acc}
    \begin{tabular}{|c|c|c|c|}
    \hline
       Model  & Dense-SGD & SLGS-SGD  & LAGS-SGD  \\\hline\hline
       ResNet-20  & $0.9092$ & $0.8985$ & $0.9024$ \\\hline
       VGG-16  & $0.9278$ & $0.9255$ & $0.9227$ \\\hline
       ResNet-50  & $0.7191$ & $0.7211$ & $0.7183$ \\\hline
       LSTM-PTB & $106.7$ & $105.7$ & $109.4$ \\\hline
    \end{tabular}
    \vspace{-8pt}
\end{table}

\subsection{Wall-clock Time Performance and Discussions}
We evaluate the average iteration time with CNNs including VGG-16I (VGG-16 \cite{simonyan2014very} for ImageNet), ResNet-50 and Inception-v4 on the large-scale data set ImageNet (over one million training images) on a 16-GPU cluster (four nodes and each node contains four Nvidia Tesla V100 PCIE-32G GPUs) connected with 10Gbps Ethernet (10GbE). The servers in the cluster are with Intel CPUs (Xeon E5-2698v3 Dual), Ubuntu-16.04 and CUDA-10.0. The main libraries used in our experiments are PyTorch-v1.1, OpenMPI-v4.0.0, Horovod-v0.18.1 and NCCL-v2.3.7. The experimental results are shown in Table \ref{table:results} using the compression ratio of $1000$ for gradient sparsification. It demonstrates that LAGS-SGD performs around $3.3\%-4.5\%$ faster than SLGS-SGD, which is up to $45\%$ close to the maximum speedup, and it achieves $22\%-30\%$ improvement over Dense-SGD. 

\begin{table}
\vspace{-4pt}
    \centering
     \caption{The average iteration time in seconds of 1000 running iterations. $S_1$ and $S_2$ indicate the speedups of LAGS-SGD over Dense-SGD and SLGS-SGD respectively. $S_{max}$ is the maximum speedup of pipelining over SLGS-SGD.}
    \label{table:results}
    \centering
    \addtolength{\tabcolsep}{-1.pt}
    \begin{tabular}{|l|c|c|c||c|c|c|}
    \hline
         Model & Dense & SLGS & LAGS & $S_1$ & $S_2$ & $S_{max}$\\\hline\hline
        VGG-16I & $1.488s$ & $1.003$ & $0.961$ & $1.507$ & $1.044$ & $1.096$  \\\hline
        ResNet-50 & $0.570s$ & $0.485s$  & $0.464s$ & $1.228$ & $1.045$ & $1.133$ \\\hline
        Inception-v4 & $0.891s$ & $0.749s$ & $0.725s$ & $1.229$ & $1.033$ & $1.204$ \\\hline
    \end{tabular}
\end{table}

The achieved speedups of LAGS-SGD over SLGS-SGD in the end-to-end training wall-clock time are minor, which is caused by three main reasons. First, as shown in Eq. (\ref{equ:smax}), the improvement of LAGS-SGD over SLGS-SGD is highly depended on the communication-to-computation ratio $r$. In the conducted experiments, $r$ is small because transferring highly sparsified data under 10GbE is much faster than the computation time on Nvidia Tesla V100 GPUs, while the proposed method has potential improvement with increased $r$ such as lower bandwidth networks. Second, the compression time is not negligible compared to the communication time. Even we exploit the sampling method \cite{lin2017deep} to select the top-$k$ gradients, it is inefficient on GPUs so that it enlarges the computation time, which makes $r$ smaller. For example, in the VGG-16I model, the backpropagation time is around $0.391s$, while the gradient compression time and the communication time are about $0.359s$ and $0.049s$ respectively. It is worthy to explore more efficient selection algorithms for gradient sparsification like \cite{shi2019understanding}. Third, the layer-wise communications introduce many startup times in transferring small tensors which could make the performance even worse if the communications are not scheduled properly. It could be possible to exploit the adaptive tensor fusion method \cite{shi2019mgwfbp} to further improve the scalability. We will leave this as our future work.  

\section{CONCLUSION}\label{sec:conclude}
In this paper, we proposed a new distributed optimization algorithm for deep learning named LAGS-SGD, which exploits a novel layer-wise adaptive gradient sparsification scheme to embrace the promising pipelining techniques and gradient sparsification methods. LAGS-SGD not only takes advantage of the gradient sparsification algorithm to reduce the communication size, but also makes use of the pipelining technique to further hide the communication overhead. We provided detailed theoretical analysis for LAGS-SGD which showed that LAGS-SGD has convergence guarantees and the consistent convergence rate as the original SGD under a weak analytical assumption. We ran extensive experiments to verify the soundness of the analytical assumption and theoretical results. Experimental results on a 16-node GPU cluster connected with 10Gbps Ethernet interconnect demonstrated that LAGS-SGD outperforms the state-of-the-art sparsified S-SGD and Dense-SGD with comparable model accuracy.

\label{ack}
\ack The research was supported by Hong Kong RGC GRF grant HKBU 12200418. We acknowledge Nvidia AI Technology Centre (NVAITC) for providing GPU clusters for experiments.

\bibliography{layerwise-sparsification.bbl}

\end{document}